\documentclass{article} 
\usepackage{iclr2016_conference,times}
\usepackage{hyperref}
\usepackage{url}
\usepackage{amsmath,amsfonts,amsthm}

\title{Censoring Representations with an Adversary}

\author{
Harrison Edwards \& Amos Storkey  \\
Department of Informatics\\
University of Edinburgh\\
Edinburgh, UK,  EH8 9AB \\
\texttt{H.L.Edwards@sms.ed.ac.uk, A.Storkey@ed.ac.uk} \\
}

%

\iclrfinalcopy 

\begin{document}

\maketitle

\begin{abstract}
In practice, there are often explicit constraints on what representations or decisions are acceptable in an application of machine learning. For example it may be a legal requirement that a decision must not favour a particular group. Alternatively it can be that that representation of data must not have identifying information. We address these two related issues by learning flexible representations that minimize the capability of an adversarial critic. This adversary is trying to predict the relevant sensitive variable from the representation, and so minimizing the performance of the adversary ensures there is little or no information in the representation about the sensitive variable. We demonstrate this adversarial approach on two problems: making decisions free from discrimination and removing private information from images. We formulate the adversarial model as a minimax problem, and optimize that minimax objective using a stochastic gradient alternate min-max optimizer. We demonstrate the ability to provide discriminant free representations for standard test problems, and compare with previous state of the art methods for fairness, showing statistically significant improvement across most cases. The flexibility of this method is shown via a novel problem: removing annotations from images, from separate training examples of annotated and unannotated images, and with no a priori knowledge of the form of annotation provided to the model. 

\end{abstract}

\section{Introduction}
\label{introduction}
When we apply machine learning techniques many real-world settings, it is not long before we run into the problem of sensitive information. It may be that we want to provide information to a third party, but be sure that third party cannot determine critical sensitive variables. Alternatively it may be that we need to make decisions that do not treat one category differently from another. Two specific cases of this are image anonymization and fairness respectively.

\subsection{Fairness}
As more of life's decisions become automated there is a growing need, for legal and ethical reasons, to have machine learning algorithms that can make \emph{fair} decisions. A decision is fair if it does not depend upon a sensitive variable such as gender, age or race. One naive approach to achieving a fair decision would be to simply remove the sensitive covariate from the model. But information about the sensitive variable can `leak' back into the decisions made if there is any dependence between it and the other variables. In this work we focus on \emph{fair classifiers} where we want to predict a binary variable $Y$ and be fair with respect to a binary sensitive variable $S$. Here, fairness means that the decision is not-dependent on (i.e. marginally independent of) the sensitive variable.

Previous works, such as those listed in Section \ref{fair_classifiers}, have tended to develop specific fair variants of common classifiers to solve this problem. Our approach, called \emph{adversarial learned fair representations} (ALFR), is to learn representations of the data which are concurrently both fair and discriminative for the prediction task. These features can then be used with any classifier. To achieve both fair and discriminative properties, we represent this as a dual objective optimization problem that can be cast as a minimax problem. We maintain the flexibility of both the representation and the test of fairness, by using deep feed-forward neural networks for each part. There is a deep neural network that is used to produce the representation; that representation is then critiqued by a deep neural adversary who tries to predict the sensitive variable from the representation.

In this paper, we introduced the adversarial method ALFR as a minimax problem, describe the optimization process, and evaluate ALFR on two datasets, \emph{Diabetes} and \emph{Adult}. This demonstrates improvement over a related approach \cite{LFR}. We also, as an aside, provide the relationship between the discrimination of a classifier and the H-divergence, as used in domain adaptation. The relationship of these methods to domain adaptation is interesting: the different cases for the sensitive variable can be thought of as different domains. However, we leave the study of this for another paper.

\subsection{Image Anonymization}
There are many notions and problems relating to privacy in the literature. Strict forms of privacy such as that enforced by differential privacy are not always necessary, and more relaxed notions of privacy are beneficial in reducing distortion and making more accurate analysis possible . One particular case of privacy is where certain parts of the data should not be communicated (e.g. someones address or name). However in many settings it is hard to be explicit about exactly what should not be communicated or whether that information is coupled with other measured variables.

In this work we consider the concrete case of removing private information from an image, whilst distorting the image as little as possible. Examples of private information include: licence plates on cars in photos and doctors' annotations on medical images such as X-rays. We suggest a modification of an autoencoder to remove such private information, and validate this idea  by removing surnames from a collection of images of faces. A similar application might be removing logos or watermarks from images. 

The novelty of our approach is that the model does not need to be trained with aligned input/output examples, rather only examples of inputs and (separately) examples of outputs, labelled as such. For example if the task is to remove text from an image, an aligned input/output pair would be an image containing text, and the same image with the text removed. Unaligned data would simply be images labelled as containing no text, and images labelled as containing text. The former sort of data would often be substantially more difficult to obtain than the latter.

Once again we use the same two-neural-network minimax formalism to characterise the problem, and the same stochastic gradient optimization procedure to learn the neural network parameters. The model is applied to the problem of images with and without annotation, to good visual affect. A neural network can no longer distinguish well between annotated and non-annotated images, and the actually annotation itself is obscured.

\section{Related Work}
\label{related_work}
\subsection{Adversarial Learning}
\label{related_adversarial_work}
The idea of adversarial learning is that one has a representation $R$, a dependent variable $S$ and an adversary that tries to predict $S$ from $R$. The adversary then provides an adaptive measure of dependence between $R$ and $S$ which can then be used to learn $R$ so that it minimizes this dependence.

The adversarial approach was, to the best of our knowledge, introduced in \cite{factorial_codes} where it was used to learn a representation of data $R = (R_1, \dots, R_d)$ where each $R_i$ is both binary and independent of the other $R_j$. The experiments in this work were on synthetic data, and were later followed up to learn filters from natural image patches in \cite{semilinear_predictability_minimization} and \cite{semilinear_predictability_minimization2}. They referred to this approach as the \emph{principle of predictability minimization}.

More recently in \cite{gen_adv} the idea of using an adversary to learn a generative model of data was introduced and followed-up by work such as \cite{gen_adv_cond1}, \cite{gen_adv_cond2} and \cite{pyramid_adv}. In this setting the representation $R$ is a mixture of data samples and generated samples, and $S$ is a binary variable indicating whether a given sample is from the data or `fake'. For discussion on using `distinguishability criteria' to learn generative models see \cite{distinguishability_criteria}. 

The inspiration for ALFR comes from using adversarial learning to do domain-adaptation in \cite{domain_adapt_adv}. In this setting $S$ is a variable indicating the domain, and the idea is to learn a representation $R$ in which the domains are indistinguishable, motivated by bounds in \cite{theory_different_domains} relating performance on the target domain (the new domain to which we want to adapt) to the dissimilarity of the target and source domains.

\subsection{Fair Classifiers}
\label{fair_classifiers}
Several works have proposed variants of classifiers that enforce fairness. These include discrimination-free naive-Bayes (\cite{fair_naive_bayes}), a regularized version of logistic regression (\cite{fair_reg_log_reg}), and a more recent approach \cite{fairness_constraint}, where the authors introduce constraints into the objective functions of logistic regression, hinge-loss classifiers and support vector machines.

Another approach is \emph{data massaging}, whereby the labels of the training data are changed so that the training data is fair (\cite{fair_relabelling}), this is similar to the resampling methods often used to tackle class-imbalance problems.

An approach more in the spirit of ALFR is \emph{learned fair representations} (LFR) (\cite{LFR}). In that paper, the authors learn a representation of the data that is a probability distribution over clusters --- a form of `fair clustering' --- where learning the cluster of a datapoint tells one nothing about the sensitive variable $S$. The clustering is learned to be fair and also discriminative for the prediction task.

Other than LFR, the previous work has focused on enforcing fair decisions, whereas LFR and ALFR aim to get fairness as a side-effect of fair representations. The advantage of the latter approach is that the representations can potentially be reused for different tasks, and there is the possibility of a separation of concerns whereby one party is responsible for making the representations fair, and another party is responsible for making the best predictive model. In contrast to LFR, our approach is more flexible in terms of the kinds of representations it can learn, whereas LFR learns essentially categorical representations. In addition our approach means that the representations can be used with any classifier.

Concurrent with this work is the preprint `The Variational Fair Autoencoder' (\cite{variational_fair_autoencoder}). There are two main differences between this work and theirs. First they use a variational autoencoder that factorizes the latent variables $Z$ and sensitive variable $S$. This aspect is complementary and could be incorporated into the adversarial framework. Secondly they use a Maximum Mean Discrepancy (MMD) penalty to reduce dependence of the representation on $S$, this is a kernel-based alternative to using an adversary. It is not yet clear in what circumstances we should prefer MMD over an adversary, we hope that future work will address this question.

\subsection{Removing Private Information}
The problem of removing information from images in a learned manner has not been tackled, to the best of our knowledge, in the machine learning community. However there has been work on detecting when private information has been erroneously left in an image, such as in \cite{screenavoider} and \cite{neural_net_privacy}

\section{Formalism: Fairness and Discrimination}
\label{fairness_and_disc}

We consider a binary classification task where $\mathcal{X} \subset \Real^n$ is the input space, $\mathcal{Y}=\{0,1\}$ is the label set and $\mathcal{S}=\{0,1 \}$ is a protected variable label set. 
We are provided with \emph{i.i.d} examples $\{(x_i, y_i, s_i) \}^{m}_{i=1}$ drawn from some joint distribution $P$ over corresponding random variables $(X, Y,S)$. The goal of the learning algorithm is to build a classifier $\eta: \mathcal{X} \to \mathcal{Y}$ that has high accuracy whilst maintaining the property of \emph{statistical parity} or \emph{fairness}, that is 
\begin{equation}
P(\eta(X)=1 | S =1) = P(\eta(X)=1 | S= 0).
\end{equation}

A key statistic we will use to measure statistical parity is the \emph{discrimination} defined
\begin{equation}
y_{disc} = \left | \frac{\suml{i: s_i=0}{} \eta(x_i)}{N_0} -  \frac{\suml{i: s_i=1}{} \eta(x_i)}{N_1} \right |,
\end{equation}
where $N_0,N_1$ are the number of data items where $s_i$ equal to $0$ and $1$, respectively. We measure the success of our classifier using the empirical accuracy $y_{acc}$.

Following \cite{LFR} we aim to optimize the difference between discrimination and classification accuracy
\begin{equation}
y_{t,delta}= y_{acc} - t \cdot y_{disc},
\end{equation}
where $t \ge 0$, called the \emph{delta}. In \cite{LFR} they consider the specific trade off where $t=1$, whereas we will evaluate our models across a range of different values for $t$.

\section{Censored Representations}
\label{censored_representations}
In this section we show how a modification of an autoencoder can be used to learn a representation that obscures/removes a sensitive variable. In the general case we have an input variable $X$, a target variable $Y$ and a binary sensitive variable $S$. The objective is to learn a representation $R = \text{Enc}(X)$ that preserves information about $X$, is useful for predicting $Y$ and is approximately independent of $S$. The loss we will use is of the form
\begin{equation}
L = \alpha C(X, R) + \beta D(S,R) + \gamma E(Y, R) 
\end{equation}
where $C$ is the cost of reconstructing $X$ from $R$, $D$ is a measure of dependence between $R$ and $S$ and $E$ is the error in predicing $Y$ from $R$. The scalars $\alpha, \beta, \gamma \ge 0$ are hyperparameters controlling the weighting of the competing objectives. If we don't have a specific prediction task, as in the case where we just want to remove private information from an image, then we don't need the $\gamma$ term. On the other hand, if we are not interested in reusing the representation for different predictive tasks then we may set $\alpha = 0$. We may also want to have $\alpha >0$ in the case where we want to learn a transformation of the data $X \to \hat{X}$ that is fair and preserves the semantics of the representation, for example in certain regulated areas the interpretability of the model is paramount. Notice also that only the final term depends-upon $Y$ and so there is an opportunity to train in a semi-supervized fashion if the labels are not available for all of the data.

\subsection{Quantifying Dependence}
We begin with quantifying the dependence $D(S,R)$ between the representation $R$ and the sensitive variable $S$. Since $S$ is binary, we can think of this as measuring the difference between two conditional distributions:  $R_0 \sim R | (S= 0)$ and $R_1 \sim R | (S =1)$. We will do this by training a classifier to tell them apart, called the \emph{adversary}. Interested readers may wish to know that this measure of dependence is related to the notion of an $\mathcal{H}$-divergence, as described in Appendix \ref{App:AppendixA}.

In particular if the adversary network $\text{Adv}: \mathcal{R} \to [0,1]$ trained to discriminate between $R_0$ and $R_1$ has parameters $\phi$, and the encoder $\text{Enc}: \mathcal{X} \to \mathcal{R}$ has parameters $\theta$, then we can define
\begin{equation}
\label{dep_cost}
D_{\theta, \phi}(R,S) = \Expt_{X,S} S \cdot \log \left ( \text{Adv}(R) \right )+ (1-S) \cdot \log \left ( 1-\text{Adv}(R) \right ),
\end{equation}

that is the \emph{negative} of the standard log-loss for a binary classifier. The adversary's parameters $\phi$ should be chosen to maximize $D$ (and hence to approximately realize the empirical $\mathcal{H}$-divergence $\hat{d}_{\mathcal{H}}(A,B)$), whilst the representation parameters $\theta$ should be chosen to minimize $D$ (and hence to approximately minimize $\hat{d}_{\mathcal{H}}(A,B)$), so we have a minimax problem:
\begin{equation}
\min _{\theta} \max_{\phi} D_{\theta, \phi}(R,S).
\end{equation}
Of course, this admits a trivial solution by learning a constant representation $R$, and so we must introduce constraints to learn anything interesting.

\subsection{Quantifying C(X,R)}
We quantify the information retained in $R$ about $X$ by the ability of a decoder $\text{Dec}: \mathcal{R} \to \mathcal{X}$ to reconstruct $X$ from $R$, in particular we use the expected squared error
\begin{equation}
\label{recon_cost}
C_{\theta}(X,R) = \Expt_X  \norm{X - \text{Dec}(R)}^2_2,
\end{equation}
where we extend $\theta$ to include the parameters in the decoder Dec.

\subsection{Quantifying E(Y,R)}
We quantify how discriminative $R$ is for the prediction task using the log-loss of a classifier or predictor network $\text{Pred}: \mathcal{R} \to [0,1]$, trained to predict $Y$ from $R$:
\begin{equation}
\label{pred_cost}
E_{\theta}(R,S) = - \Expt_{X,Y} Y \cdot \log \left ( \text{Pred}(R)  \right ) + (1-Y) \cdot \log \left ( 1- \text{Pred}(R) \right ),
\end{equation}
where we again extend $\theta$ to encompass the parameters of the predictor network Pred.

\subsection{Optimization}
There are four elements in the model: the encoder, the decoder, the predictor and the adversary. Each is implemented using a feed-forward neural network, the precize architectures are detailed in Section \ref{experimental_results}. The costs for these elements, given in equations \ref{pred_cost}, \ref{dep_cost} and \ref{recon_cost}, are joined together to give the joint loss

\begin{equation}
 L(\theta,\phi) = \alpha C_{\theta} (X,R) + \beta D_{\theta, \phi}(R,S) +  \gamma E_{\theta}(Y,R)
\end{equation}
This enables the problem of learning censored representation to be cast as the minimax problem
\begin{equation}
\min_{\theta} \max_{\phi} L(\theta, \phi).
\end{equation}
Expecting to be able to train the adversary in the inner loop to a global optimum is unrealistic. Instead, as described in \cite{gen_adv} we use a heuristic: a variant on stochastic gradient descent where for each minibatch we decide whether to take a gradient step with respect to the actors parameters $\theta$ or a negative gradient step with respect to the adversary's parameters $\phi$. In \cite{gen_adv} they consider simple strict alternation between updating the adversary and actor, we find this to be a useful default. We give detailed pseudo-code for strict alternation in Algorithm \ref{training_algorithm}. Note that the gradient steps in Algorithm \ref{training_algorithm} can easily be replaced with a more powerful optimizer such as the \emph{Adam} algorithm (\cite{Adam}). This method is a heuristic in the sense that we do not provide formal guarantees of convergence, but we do know that the solution to the minimax problem is a fixed point of the process. There are a large number of papers using this method (such as those mentioned in \ref{related_adversarial_work}) and getting good results, so there is considerable empirical evidence in its favour.

The key issue in this process is that if the adversary is too competent then the gradients will be weak for the actor, whereas if the adversary is too incompetent then the gradients will be uninformative for the actor.  We have also considered not updating the adversary if, for instance, its accuracy in predicting $S$ is over a threshold, say $90\%$, and not updating the generator if the accuracy is below a threshold, say $60\%$. We find that this sometimes improves results, but more investigation is needed.
\begin{algorithm}                        
\caption{Strictly alternating gradient steps.}
\label{training_algorithm}
\begin{algorithmic}                    
    \State $\theta, \phi \leftarrow$ initialize network parameters
    \State $U \leftarrow \text{True}$ \Comment{Boolean indicating whether to update parameters $\theta$ or $\phi$.}
    \Repeat
    \State $X,Y,S \leftarrow$ random mini-batch from dataset
    \State $R \leftarrow \text{Enc}(X)$
    \State $\hat{X} \leftarrow \text{Dec}(R)$
    \State $\hat{Y} \leftarrow \text{Pred}(R)$
    \State $\hat{S} \leftarrow \text{Adv}(R)$
    \State $C \leftarrow \frac{1}{|X|} \suml{x, \hat{x} \in X,\hat{X}}{} \norm{x - \hat{x}}^2_2$ \Comment{Reconstruction loss for the autoencder.}
    \State $E \leftarrow \frac{-1}{|X|} \suml{y, \hat{y} \in  Y,\hat{Y}}{} y \cdot \log (\hat{y}) + (1-y) \cdot \log (1-\hat{y})$ \Comment{Log-loss for the predictor.}
    \State $D \leftarrow \frac{1}{|X|} \suml{s, \hat{s} \in S, \hat{S}}{} s \cdot \log (\hat{s}) + (1-s) \cdot \log (1-\hat{s}) $ \Comment{Negative log-loss for the adversary.}
    \State $L \leftarrow \alpha  C + \beta  D + \gamma  E$ \Comment{Joint loss.}

    \If{$U$} \Comment{Updating adversary's parameters.}
            \State $\theta \leftarrow \phi  + \alpha \nabla_{\phi} L $
    \Else \Comment{Updating autoencoder's parameters.}
    \State $\phi \leftarrow \theta  -  \alpha \nabla_{\theta} L $
    \EndIf
    \State $U \leftarrow \text{not } U$
    \Until{Deadline}

\end{algorithmic}
\end{algorithm}

\subsection{Adversarial Learned Fair Representations (ALFR)}
We apply the general setup described in Section \ref{censored_representations} to the case of learning fair classifiers as described in Section \ref{fairness_and_disc}. In this case the sensitive variable $S$ would correspond to some category like gender or race, and the target variable $Y$ would be the attribute we wish to predict using these fair representations. 

In this specific case it is worth pointing out that the discrimination $y_{disc}$ of the classifier $\eta$ is bounded above by the empirical $\mathcal{H}$-divergence, as shown in Lemma \ref{disc_divergence_lemma} in Appendix \ref{App:AppendixA}, that is
\begin{equation}
y_{disc} \le \frac{1}{2} \hat{d}_{\mathcal{H}}(\hat{R}_0,\hat{R}_1),
\end{equation}
where $\mathcal{H}$ is a symmetric hypothesis class on $R$ including $\eta$ and $\hat{R}_0, \hat{R}_1$ are the empirical samples given to us. This is because the discrimination between $R_0$ and $R_1$ given by the best hypothesis must be at least as good as some particular hypothesis $\eta$, assuming $\eta \in \mathcal{H}$. So minimizing the divergence minimizes the discrimination.

\subsection{Anonymizing Images}
We apply the the idea of censoring representations to the application of removing text from images. In this problem the input $X$ is an image and the sensitive variable $S$ describes whether or not the image contains private information (text). Here there is no prediction task and so there is no variable $Y$. In contrast to ALFR, we are not interested in learning a hidden representation, but the reconstructed image $\hat{X}$, so in this case we have $R = \hat{X}$. In order to evaluate the model, we need to have a small amount of validation/test data where we have pairs of images with and without the text. This is used to choose hyperparameters, but the model itself never gets to train on example input/output pairs.

\section{Experimental Results}
\label{experimental_results}
We used the \emph{Adam} algorithm with the default parameters for all our optimizations. The experiments were implemented using \emph{theano} (\cite{theano1}, \cite{theano2}) and the python library \emph{lasagne}.

\subsection{Fairness}

\subsubsection{Datasets}
We used two datasets from the UCI repository \cite{UCI_rep} to demonstrate the efficacy of ALFR.

The \emph{Adult} dataset consists of census data and the task is to predict whether a person makes over $50K$ dollars per year. The sensitive attribute we chose was \emph{Gender}. The data has $45,222$ instances and $102$ attributes. We used $35$ thousand instances for the training set and approximately $5$ thousand instances each for the validation and test sets.

The \emph{Diabetes} dataset consists of hospital data from the US and the task is to predict whether a patient will be readmitted to hospital. The sensitive attribute we chose was \emph{Race}, we changed this to a binary variable by creating a new attribute \emph{isCaucasian}. The data has around $100$ thousand instances and $235$ attributes. We used $80$ thousand instances for the training set and approximately $10$ thousand instances each for the validation and test sets. 

\subsubsection{Protocols and Architecture}
To compare ALFR with LFR we split each dataset into training, validation and test sets randomly, we then run $100$ experiments per model with different hyperparameters. Then for each value of $t$ considered we selected the model maximizing $y_{t, disc}$ on the validation data. This process was repeated $5$ times on different data splits. Since each model sees the same data split, the observations are paired and so we get $5$ observations of the difference in performance for each value of $t$. 

In evaluating the results, we want to evaluate the approaches for different possible tradeoffs between accuracy and discrimination, and so we compare with a range of values of $t \in [0,3]$ and explore hyperparameters using a random search (the $100$ settings over hyperparameters are drawn from a product of simple distributions over each hyperparameter). We use random search, as opposed to a sequential, performance driven search, so that we are able to compare the models across a range of values of $t$. When using these methods in practice one should select the hyperparameters, using Bayesian optimization or a similar approach, to maximize the specific tradeoff one cares about.

We now give details of the priors over hyperparameters used for the random search. The autoencoder in ALFR had $U\left( \{ 1, \dots, 3 \} \right )$ encoding/decoding layers with $U\left( \{ 1, \dots, 100 \} \right )$ hidden units, with all hidden layers having the same number of hidden units. Each encoding/decoding unit used the ReLU (\cite{ReLU}) activation. The critic also had $U\left( \{ 1, \dots, 3 \} \right )$ hidden layers with ReLU activations. The predictor network was simply a logistic regressor on top of the central hidden layer of the autoencoder. In the LFR model the model had $U\left( \{ 5, \dots, 50 \} \right )$ clusters. In both models the reconstruction error weighting parameter $\alpha$ was fixed at $0.05$. For both models we used had $\beta \sim U[0,50]$ and $\gamma \sim U[0,10]$.

\subsubsection{Results}
We found the LFR model was sensitive to hyperparameters/initialization and could often stick during training, but given sufficient experiments we were able to obtain good results. In Figure \ref{adult_results} and  Figure \ref{diabetes_results} we see the  results of applying both LFR and ALFR on the \emph{Adult} and \emph{Diabetes} data respectively. In both cases we see that the ALFR model is able to obtain significantly better results across most of the range of possible tradeoffs between accuracy and discrimination. The step changes in these figures correspond to places where the change in tradeoff results in a different form of model. This also clarifies that the results are often not very sensitive to this tradeoff parameter.

\begin{figure}[h]

\begin{center}
\includegraphics[width = 1\linewidth]{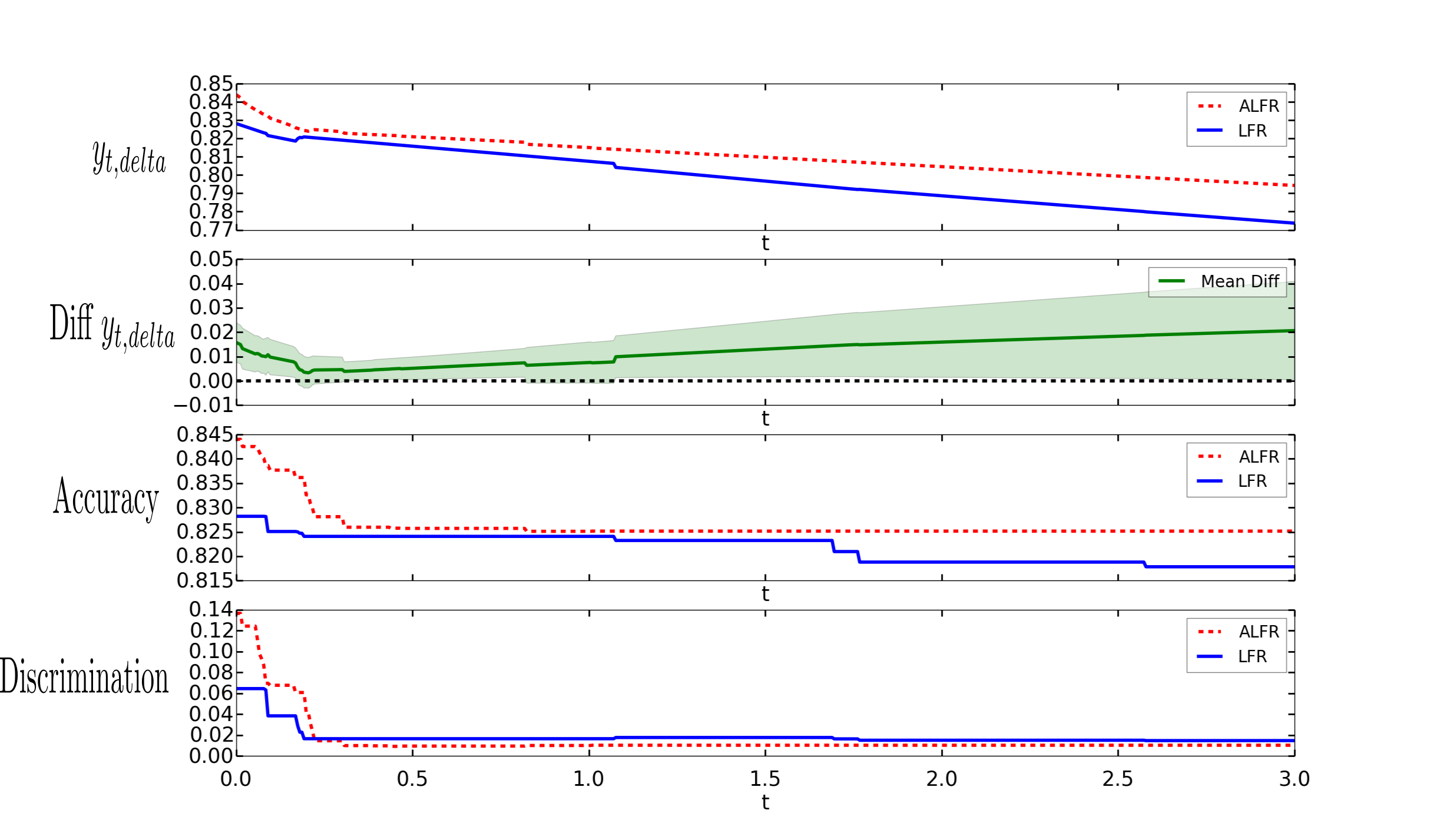}
\end{center}
\caption{Results on \emph{Adult} dataset. For each value of $t$ the model maximizing $y_{t, delta}$ on the validation set is selected, the plots show the performance of the selected models on the test data. Top row is $y_{t,delta}$. Second row down shows the mean paired difference between the $y_{t, delta}$ for the ALFR model and LFR model, where positive values favour ALFR. We also give a 95\% CI around the mean. Third row  down is $y_{acc}$. Bottom row is $y_{disc}$. We see from the top row that the ALFR model has better $y_{t,delta}$ for every setting of $t$ considered. Moreover the difference is significant for most values of $t$ (that is, the CI does not include zero). \label{adult_results}}
\end{figure}

\begin{figure}[h]

\begin{center}
\includegraphics[width = 1\linewidth]{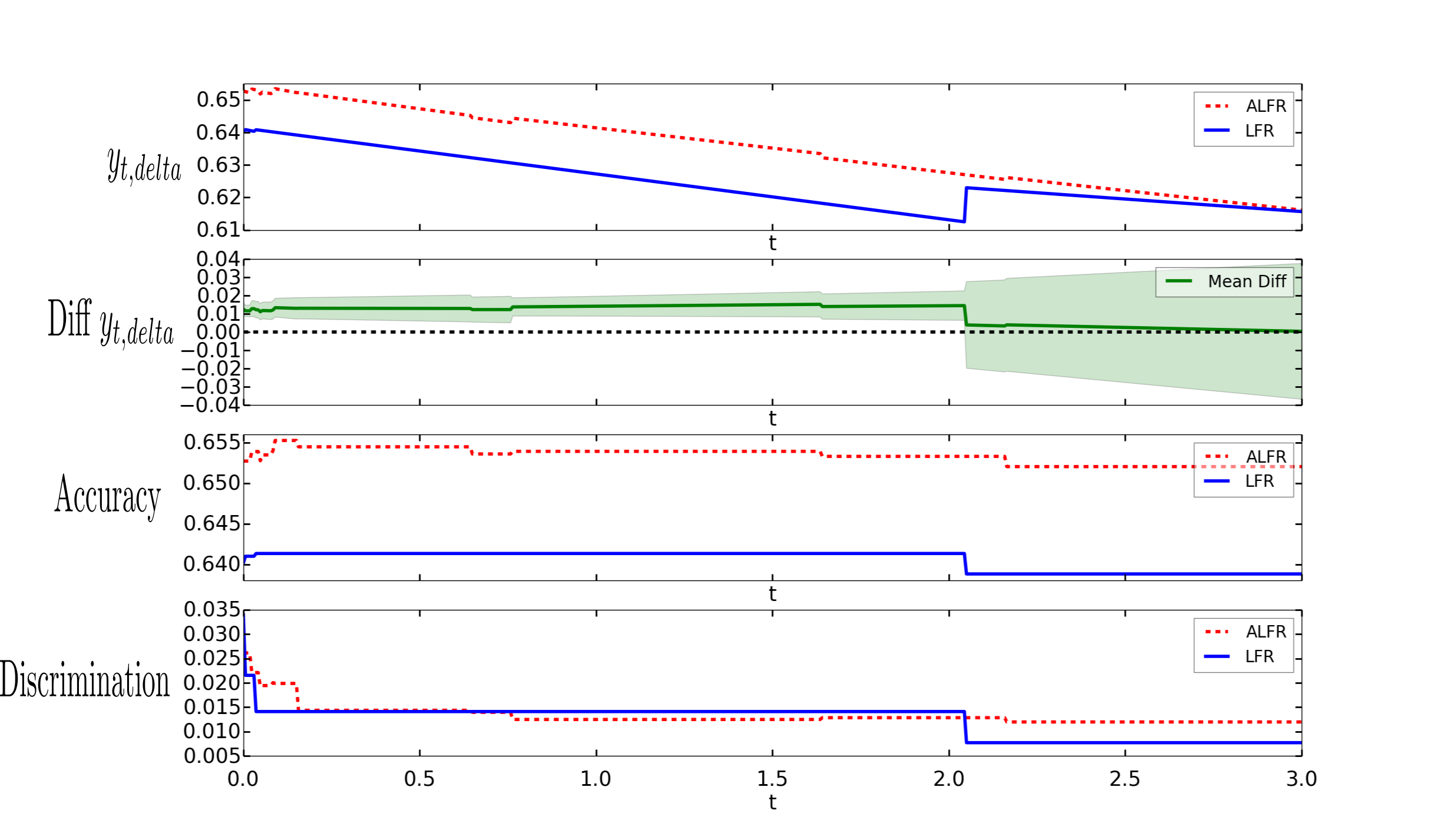}
\end{center}
\caption{Results on \emph{Diabetes} dataset. For each value of $t$ the model maximizing $y_{t, detla}$ on the validation set is selected, the plots show the performance of the selected models on the test data. Top row is $y_{t,delta}$. Second row down shows the mean paired difference between the $y_{t, delta}$ for the ALFR model and LFR model, where positive values favour ALFR. We also give a 95\% CI around the mean. Third row  down is $y_{acc}$. Bottom row is $y_{disc}$.We see from the top row that the ALFR model has better $y_{t,delta}$ for every setting of $t$ considered. We also see that the difference is significant for $ \approx t \le 2$ (that is, the CI does not include zero).\label{diabetes_results} }
\end{figure}

We were also interested in the effect of the hyperparameters $\alpha$, $\beta$, $\gamma$ on the discrimination of the ALFR model. In particular we consider the ratio $\frac{\beta}{\beta + \gamma}$, measuring the relative importance of the dependence term $D(X,R)$ over the prediction error term $E(Y,R)$ in the cost. In Figure \ref{hyperparams} we see that the larger $\beta$ is relative to $\gamma$, the lower the discrimination is (up to a point), which matches expectations.

\begin{figure}[h]

\begin{center}
\includegraphics[width = 1\linewidth]{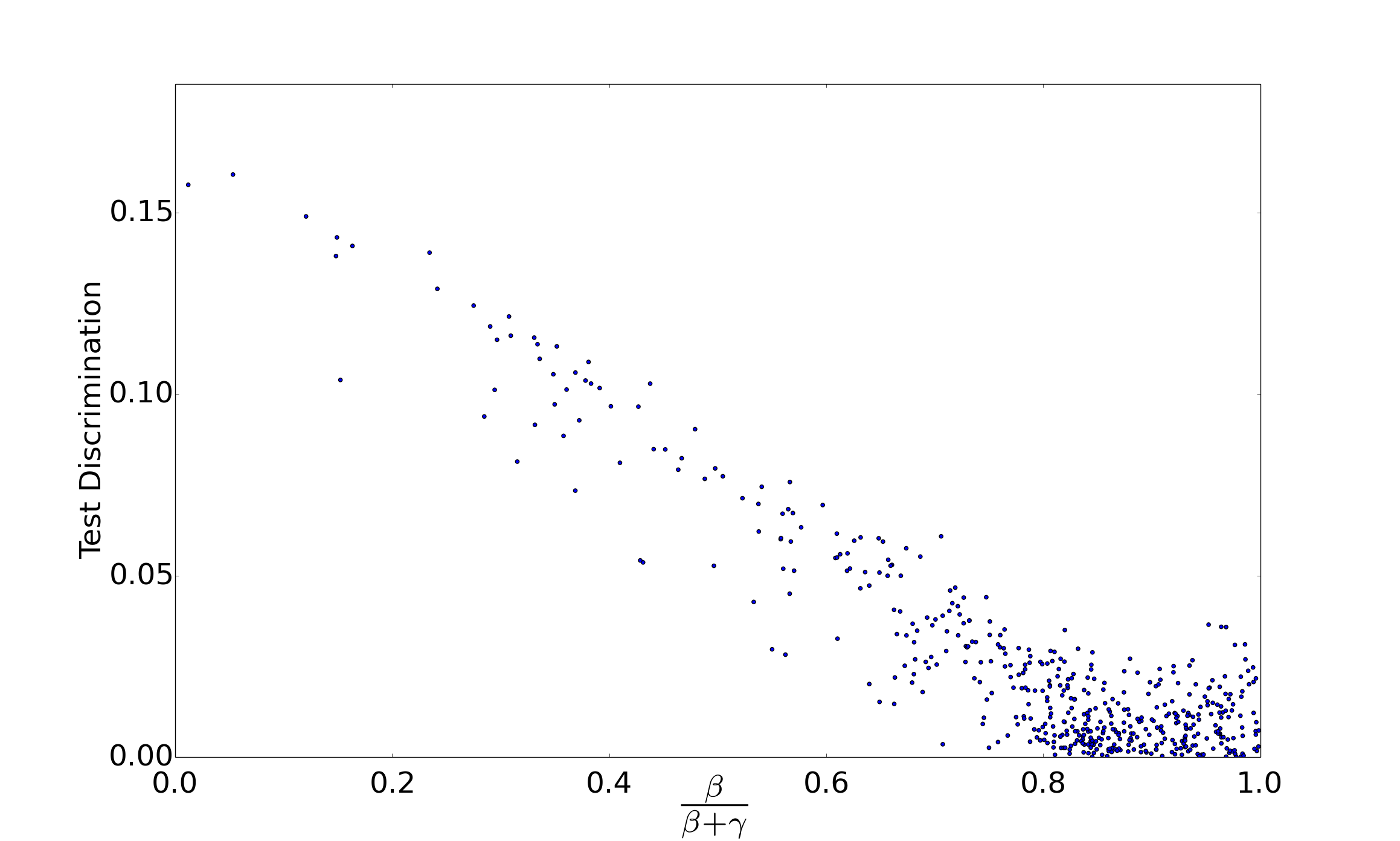}
\end{center}
\caption{A scatter plot of $\frac{\beta}{\beta+\gamma}$ versus the test discrimination of the ALFR model on the \emph{Adult} dataset. We see an approximately linear relation up to a certain point, after which further relative increase in $\beta$ has little effect on the discrimination. \label{hyperparams}}
\end{figure}

\subsection{Image Anonymization}
\subsubsection{Datasets}
We used the `10k US Adult Faces' dataset \cite{10kfaces} consisting of 10,168 natural face photographs. We preprocessed the data by rescaling each to $100 \times 80$ pixels and converting to grayscale. In addition, for half the training images we added private information, in the form of text overlayed at a random location in the image. We used $200$ photographs as a validation set, and $49$ for the test set, on both the validation and test sets we have the image both with and without the text for evaluation. In this problem the sensitive variable $S$ indicates whether or not the image contains text.

\subsubsection{Protocols and Architecture}
The global image autoencoder model we used was an expert patch based model, using $5\times5$ patches. The expert model consists of two parts, a patch classifier and a neural patch autoencoder. The image reconstruction is formed by simply copying the patch if the patch classifier predicts there to be no text-label in the patch. Otherwize it uses the patch autoencoder to construct the patch. The patch classifier is pretrained with weak-supervision using image level labels for each patch.  The decision boundary for the patch adversary is a hyperparameter, optimized to use the autoencoder if the probability of a text-label is $>0.7$ in this setting. Altogether the whole image was reconstructed from the input image using the expert patch based model for each patch of the image.

Along with the image autoencoder is the full image adversary, which takes the whole reconstructed image and classifies whether the image has a text-label present or not.

The patch classifier had a single hidden layer with 200 ReLU units. The autoencoder also had a single hidden layer with $2000$ ReLU units. The adversary had two convolutional layers each with $10$ filters of size $3 \times 3$ interspersed with two max-pooling layers with pooling size $4$, this was followed by a dense layer with $1000$ hidden units with ReLU activations. Other hyperparameters were the $\alpha$ and $\beta$ weights, which were chosen to be $\alpha =1$ and $\beta =10$. 

All hyperparameters were chosen by evaluation on the validation data. The validation data consists of input/output pairs $(x_{in}, x_{out})$ where $x_{in}$ contains text, and $x_{out}$ is the corresponding image without text. The performance was measured using the mean-square error betweeen the autoencoder's output $\hat{x}$ given $x_{in}$ and the target $x_{out}$.

\subsubsection{Results}
Sample images produced by the trained model on the test data are shown in Figure \ref{faces}. The model learns to imagine what the image would have looked like had there been no text, despite having never been given example input/output pairs. The produced images are quite plausible, but artifacts become apparent when zooming in. We believe that the quality of the images could be substantially improved through the use of a convolutional autoencoder on a larger dataset, since this could take into account a wider context than a patch when removing the text. 

In this application the flexibility of the adversarial framework as compared with a clustering approach becomes apparent, it would be extremely difficult to get an LFR-style approach to work for images, since it must reconstruct images as a convex combination of template images.
\begin{figure}[h]

\begin{center}
\includegraphics[width = 1\linewidth]{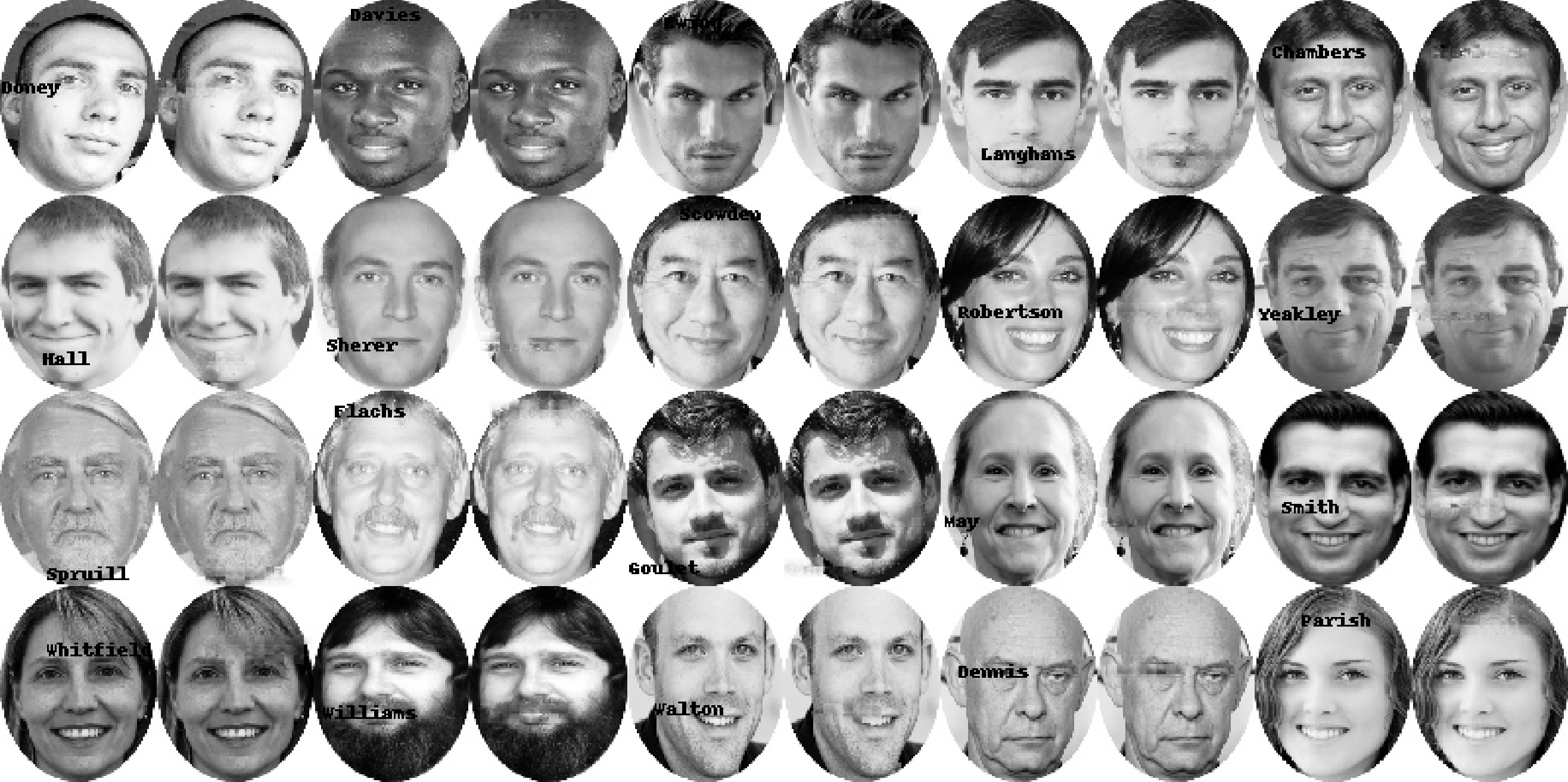}
\end{center}
\caption{Image anonymization results on the test set. In each pair of faces the left image is the input to the autoencoder, and the right image is the censored output. \label{faces} }
\end{figure}

\section{Conclusions and Future Work}
We have shown how the adversarial approach can be adapted to the task of removing sensitive information from representations. Our model ALFR improves upon a related approach whilst at the same time being more flexible. We demonstrate this flexibility by showing how the same setup can be used to remove text from an image, with encouraging results.

As has been noted before it is difficult to train adversarial models owing to the unstable dynamic between actor and adversary. There is work to be done in the future in developing theory, or at least heuristics, for improving the stability of the training process.

Following up from the application on images we would like to investigate the more challenging problem of removing more pervasive information from an image, for instance removing gender from a face.

Another interesting problem to investigate would be obscuring text in images where we only have negative examples of images with text. With no further assumptions our method would not be applicable, but if we assume that we have some information about the text then we can gain some traction. For example if the text is the name of the person in the image, and we know the name of the person in the image, then the adversary could be trained to predict a bag-of-characters of the name, whereas the autoencoder would be trained to make this task difficult for the adversary. The result should be a blurring, rather than removal of the text. One issue one would face in this approach would be a lack of any ground-truth examples for validation, since there are many ways to obscure text.

\subsubsection*{Acknowledgments}

This work was supported in part by the EPSRC Centre for Doctoral Training in Data Science, funded by the UK Engineering and Physical Sciences Research Council (grant EP/L016427/1) and the University of Edinburgh.

\bibliography{iclr2016_conference}
\bibliographystyle{iclr2016_conference}

\newpage
\appendix 
\section{The $\mathcal{H}$-Divergence} \label{App:AppendixA}
In this appendix we describe the notion of a $\mathcal{H}$-divergence, as developed in \cite{detecting_change} and \cite{rep_dom}. The $\mathcal{H}$-divergence is a way of measuring the difference between two distributions using classifiers. 

\begin{definition}
A \emph{hypothesis} for a random variable $X$ is a mapping $\eta: \mathcal{X} \to \{0,1 \}$.
\end{definition}
\begin{definition}
A \emph{hypothesis class} $\mathcal{H}$ for a random variable $X$ is a collection of hypotheses on $X$.
\end{definition}
\begin{definition}
A \emph{symmetrical hypothesis class} $\mathcal{H}$ is a hypothesis class such that for each $\eta \in H$, the inverse hypothesis $\overline{\eta}$ defined $\overline{\eta}(x) = 1 - \eta(x)$ is also in $\mathcal{H}.$
\end{definition}

Now we can define the divergence.

\begin{definition}
\label{H_divergence}
Given two random variables $X_0, X_1$ on a common space $\mathcal{X}$ and a hypothesis class $\mathcal{H}$ on $\mathcal{X}$, the $\mathcal{H}$-\emph{divergence} between $X_0$ and $X_1$ is
\[
d_{\mathcal{H}}(X_0, X_1) = \sup_{\eta \in \mathcal{H}} 2 \left |  \Expt \eta(X_0) - \Expt \eta(X_1) \right |.
\]
\end{definition}

In case $\mathcal{H}$ is symmetric, \cite{theory_different_domains} show that $d_{\mathcal{H}}(X_0, X_1)$ can be approximated empirically.

\begin{definition}
\label{empirical_H_divergence}
Let  $\mathcal{H}$ be a symmetrical hypothesis class on random variables $X_0, X_1$ on a common space $\mathcal{X}$. Now given \emph{i.i.d} samples $A = \{a_1, \dots, a_n \}$ from $X_0$ and \emph{i.i.d} samples $B = \{b_1, \dots, d_m \}$ from $X_1$, we define the \emph{empirical} $\mathcal{H}$-divergence between $X_0, X_1$ to be

\[
\hat{d}_{\mathcal{H}} (X_0, X_1) = 
 \sup_{\eta \in \mathcal{H}} 2
\left [
\frac{1}{m}\suml{x \in A}{} \eta(x) - \frac{1}{n} \suml{x \in B}{} \eta(x)
\right ],
\]

\end{definition}
The nature of this empirical approximation is shown in the following probabilistic bound from \cite{theory_different_domains}.
\begin{lemma}
Let  $ \mathcal{H} $ be a symmetrical hypothesis class with VC dimension $d$ on random variables $X_0, X_1$ on a common space $\mathcal{X}$. Now given \emph{i.i.d} samples $A = \{a_1, \dots, a_m \}$ from $X_0$ and \emph{i.i.d} samples $B = \{b_1, \dots, d_m \}$ from $X_1$, we have that, for any $\delta \in (0,1)$, with probability at least $1-\delta$,
\[
d_{\mathcal{H}}(X_0, X_1) \le \hat{d}_{\mathcal{H}} (A,B) + \sqrt{ \frac{d \log(2m) + \log(\frac{2}{\delta})}{m}}.
\]

\end{lemma}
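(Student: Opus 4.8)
The plan is to recognize this as (essentially) Lemma~1 of \cite{theory_different_domains} and reproduce its argument: re-express both $d_{\mathcal H}$ and $\hat d_{\mathcal H}$ as one-minus-twice the optimal error of a single binary ``which sample did $x$ come from?'' classification problem, and then invoke the classical Vapnik--Chervonenkis uniform-convergence inequality. First I would drop the absolute values: since $\mathcal H$ is symmetric, $\overline{\eta}\in\mathcal H$ whenever $\eta\in\mathcal H$ and $\Expt\overline{\eta}(X_0)-\Expt\overline{\eta}(X_1)=\Expt\eta(X_1)-\Expt\eta(X_0)$, so $d_{\mathcal H}(X_0,X_1)=\sup_{\eta\in\mathcal H}2\left(\Expt\eta(X_0)-\Expt\eta(X_1)\right)$ and likewise $\hat d_{\mathcal H}(A,B)=\sup_{\eta\in\mathcal H}2\left(\frac1m\sum_{x\in A}\eta(x)-\frac1m\sum_{x\in B}\eta(x)\right)$. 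Writing $p_0(\eta)=\Pr_{X_0}[\eta(x)=1]$, $p_1(\eta)=\Pr_{X_1}[\eta(x)=1]$ and $\epsilon(\eta)=\tfrac12\big((1-p_0(\eta))+p_1(\eta)\big)$ --- the balanced misclassification rate of the rule that guesses ``from $X_0$'' exactly when $\eta(x)=1$ --- one checks $\Expt\eta(X_0)-\Expt\eta(X_1)=1-2\epsilon(\eta)$, and because $\epsilon(\overline{\eta})=1-\epsilon(\eta)$ this gives $\tfrac12 d_{\mathcal H}(X_0,X_1)=1-2\inf_{\eta\in\mathcal H}\epsilon(\eta)$. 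The identical computation on the finite samples gives $\tfrac12\hat d_{\mathcal H}(A,B)=1-2\inf_{\eta\in\mathcal H}\hat\epsilon(\eta)$, where $\hat\epsilon(\eta)=\tfrac12\big((1-\hat p_0(\eta))+\hat p_1(\eta)\big)$ and $\hat p_j(\eta)$ is the empirical frequency of $\{\eta=1\}$ on the $j$-th sample.

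Next I would convert the divergence gap into a uniform deviation. Subtracting the two displays, $d_{\mathcal H}(X_0,X_1)-\hat d_{\mathcal H}(A,B)=4\big(\inf_\eta\hat\epsilon(\eta)-\inf_\eta\epsilon(\eta)\big)$. Picking $\eta^\star$ that (nearly) attains $\inf_\eta\epsilon(\eta)$ and using $\inf_\eta\hat\epsilon(\eta)\le\hat\epsilon(\eta^\star)$, the gap is at most $\hat\epsilon(\eta^\star)-\epsilon(\eta^\star)\le\sup_{\eta\in\mathcal H}\big|\hat\epsilon(\eta)-\epsilon(\eta)\big|\le\tfrac12\big(\sup_{\eta}|\hat p_0(\eta)-p_0(\eta)|+\sup_{\eta}|\hat p_1(\eta)-p_1(\eta)|\big)$. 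So it suffices to control, for each $j\in\{0,1\}$ separately, the uniform deviation over $\eta\in\mathcal H$ of the empirical frequency of the event $\{x:\eta(x)=1\}$ on an i.i.d.\ sample of size $m$.

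Then I would apply the VC inequality: the set system $\{\eta^{-1}(1):\eta\in\mathcal H\}$ has VC dimension $d$, so by Sauer's lemma its shatter coefficient at $m$ points is at most $(em/d)^d\le(2m)^d$ for $m\ge d$, and the standard uniform-convergence bound yields $\Pr\big[\sup_{\eta}|\hat p_j(\eta)-p_j(\eta)|>\varepsilon\big]\le c_1(2m)^d e^{-c_2 m\varepsilon^2}$. Setting the right-hand side equal to $\delta/2$ and union-bounding over $j\in\{0,1\}$ --- which is where the $\log(2/\delta)$ term comes from --- solving for $\varepsilon$ and substituting back through the chain above gives, with probability at least $1-\delta$, $d_{\mathcal H}(X_0,X_1)\le\hat d_{\mathcal H}(A,B)+c\sqrt{\tfrac{d\log(2m)+\log(2/\delta)}{m}}$, which is the stated inequality once the numerical constant is absorbed.

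The main obstacle is bookkeeping with constants rather than anything conceptual: the reduction contributes a factor $4$ in front of $\sup_\eta|\hat\epsilon-\epsilon|$, the split over the two samples a factor $\tfrac12$, and the VC inequality carries its own numerical constants, so recovering the clean constant-free form printed in the statement requires plugging in a sufficiently tight version of the two-sample VC bound (and, strictly, one should present this with an explicit leading constant, as \cite{theory_different_domains} does). A secondary point worth stating carefully is that, unlike the bounds that relate $d_{\mathcal H}$ to the \emph{target-domain} risk, this lemma only needs the VC dimension of $\mathcal H$ itself --- not that of $\mathcal H\Delta\mathcal H$ --- because here the \emph{same} $\eta$ is evaluated on both samples rather than a symmetric difference of two hypotheses.
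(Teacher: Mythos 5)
The paper never proves this lemma: it is imported (as a ``probabilistic bound from \cite{theory_different_domains}'') without an internal proof, so there is no in-paper argument to compare against. Your reconstruction is essentially the standard proof behind that cited result, and it is sound: using symmetry of $\mathcal{H}$ to drop absolute values, rewriting $\tfrac12 d_{\mathcal H}$ and $\tfrac12\hat d_{\mathcal H}$ as $1-2\inf_\eta\epsilon(\eta)$ and $1-2\inf_\eta\hat\epsilon(\eta)$ for the balanced ``which sample?'' error, bounding the difference of infima by $\sup_\eta|\hat\epsilon(\eta)-\epsilon(\eta)|$, splitting over the two samples, and applying the Sauer/VC uniform-convergence bound with a union bound (the source of the $\log(2/\delta)$) is exactly the route by which Ben-David et al.\ (via the two-sample bound of Kifer et al.) obtain this lemma; your remark that only the VC dimension of $\mathcal H$, not of the symmetric-difference class, is needed here is also correct. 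The one caveat is the one you already flag: with standard VC constants this argument yields $d_{\mathcal H}(X_0,X_1)\le\hat d_{\mathcal H}(A,B)+C\sqrt{\bigl(d\log(2m)+\log(2/\delta)\bigr)/m}$ for some constant $C>1$, and indeed the lemma as stated in \cite{theory_different_domains} carries a leading factor of $4$ that the present paper's transcription silently drops. So your proof establishes the statement up to that absolute constant, which is all one can honestly get; the constant-free form printed here should be read as a transcription slip in the paper rather than a gap in your argument.
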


So we can see that minimizing the capability of an adversary to tell the difference between two distributions relates to minimizing an $\mathcal{H}$-divergence. The empirical divergence can be straightforwardly related to the discrimination of a classifier.

\begin{lemma}
\label{disc_divergence_lemma}
Let $X$, $S$, $Y$ and $R$ be as described in Section \ref{censored_representations}. Let $\eta: \mathcal{R} \to \mathcal{Y}$ be our classifier and $y_{disc}$ the discrimination as described in Section \ref{fairness_and_disc}.
Then
\[
  y_{disc} \le \frac{1}{2} \hat{d}_{\mathcal{H}}(\hat{R}_0,\hat{R}_1),
\]
where $\mathcal{H}$ is a symmetric hypothesis class on $R$ including $\eta$ and $\hat{R}_0, \hat{R}_1$ are the representations of the empirical samples.
\end{lemma}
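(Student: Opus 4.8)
The plan is to exploit the symmetry of the hypothesis class $\mathcal{H}$ to remove the absolute value in $y_{disc}$, and then read off the bound directly from the supremum defining the empirical divergence.

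First I would make the setup explicit. Since $\eta$ acts on $\mathcal{R}$, write $r_i = \mathrm{Enc}(x_i)$ and let $\hat{R}_0 = \{ r_i : s_i = 0\}$, $\hat{R}_1 = \{ r_i : s_i = 1\}$, so that $y_{disc} = \left| \tfrac{1}{N_0}\sum_{i : s_i = 0}\eta(r_i) - \tfrac{1}{N_1}\sum_{i : s_i = 1}\eta(r_i)\right|$. Instantiating Definition \ref{empirical_H_divergence} with $A = \hat{R}_0$ and $B = \hat{R}_1$ gives $\hat{d}_{\mathcal{H}}(\hat{R}_0,\hat{R}_1) = \sup_{\eta' \in \mathcal{H}} 2\left( \tfrac{1}{N_0}\sum_{i : s_i = 0}\eta'(r_i) - \tfrac{1}{N_1}\sum_{i : s_i = 1}\eta'(r_i)\right)$.

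The core step is a two-sided bound obtained from two particular choices of $\eta'$ in the supremum. Taking $\eta' = \eta$ (legal since $\eta \in \mathcal{H}$) yields $\hat{d}_{\mathcal{H}}(\hat{R}_0,\hat{R}_1) \ge 2\bigl( \tfrac{1}{N_0}\sum_{i : s_i = 0}\eta(r_i) - \tfrac{1}{N_1}\sum_{i : s_i = 1}\eta(r_i)\bigr)$. Taking $\eta' = \overline{\eta} = 1 - \eta$ (legal since $\mathcal{H}$ is symmetric), and noting that the constant contributions $\tfrac{1}{N_0}N_0$ and $\tfrac{1}{N_1}N_1$ cancel, yields $\hat{d}_{\mathcal{H}}(\hat{R}_0,\hat{R}_1) \ge 2\bigl( \tfrac{1}{N_1}\sum_{i : s_i = 1}\eta(r_i) - \tfrac{1}{N_0}\sum_{i : s_i = 0}\eta(r_i)\bigr)$. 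Since $\hat{d}_{\mathcal{H}}$ dominates both a quantity and its negative, it dominates their common absolute value, i.e. $\hat{d}_{\mathcal{H}}(\hat{R}_0,\hat{R}_1) \ge 2\, y_{disc}$; dividing by $2$ completes the proof.

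There is no genuinely hard step here; the only points needing care are (i) instantiating the empirical divergence with $A,B$ in the order matching the sign convention used for $y_{disc}$, and (ii) the cancellation of the constant terms when substituting $\overline{\eta}$, which works precisely because the same normalizations $N_0$ and $N_1$ appear in both the discrimination and the empirical divergence.
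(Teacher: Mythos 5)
Your proof is correct and uses the same two ingredients as the paper's, in the same roles: $\eta \in \mathcal{H}$ to get one side of the bound, and symmetry of $\mathcal{H}$ to dispose of the absolute value. The only difference is cosmetic --- the paper writes $|\cdot| \le \sup_{\eta^*}|\cdot|$ and then asserts that the supremum of the absolute difference equals the supremum of the signed difference by symmetry, while you unpack that same symmetry step explicitly by evaluating the supremum at $\eta$ and at $\overline{\eta} = 1-\eta$ (noting the constants cancel), which is arguably a cleaner justification of the same argument.
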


\begin{proof}
Recall that
\[
y_{disc} = \left | \frac{\suml{r \in \hat{R_0}}{} \eta(r)}{|\hat{R_0}|} -  \frac{\suml{r \in \hat{R_1}}{} \eta(r)}{|\hat{R_1}|} \right |,
\]
where $m = |\hat{R_0}| = |\hat{R_1}|$. We simply observe that
\begin{align*}
 \left | \frac{\suml{r \in \hat{R_0}}{} \eta(r)}{|\hat{R_0}|} -  \frac{\suml{r \in \hat{R_1}}{} \eta(r)}{|\hat{R_1}|} \right | &\le \sup _{\eta^* \in \mathcal{H}} \left | \frac{\suml{r \in \hat{R_0}}{} \eta^*(r)}{|\hat{R_0}|} -  \frac{\suml{r \in \hat{R_1}}{} \eta^*(r)}{|\hat{R_1}|} \right | \\
&=  \sup _{\eta^* \in \mathcal{H}} \left [  \frac{\suml{r \in \hat{R_0}}{} \eta^*(r)}{|\hat{R_0}|} -  \frac{\suml{r \in \hat{R_1}}{} \eta^*(r)}{|\hat{R_1}|} \right ] \\
&= \frac{1}{2} \hat{D}_{\mathcal{H}}(\hat{R_0}, \hat{R_1})
\end{align*}
where for the inequality we use the fact that $\eta \in \mathcal{H}$ and for the first equality we use the fact that $\mathcal{H}$ is a symmetric hypothesis class.

\end{proof}

\end{document}